\crefname{equation}{}{}
\newtheorem{theorem}{Theorem}
\newtheorem{assumption}{Assumption}
\begin{document}

\title{Traversing Supervisor Problem: An Approximately Optimal Approach to Multi-Robot Assistance}


\author{\authorblockN{Tianchen Ji, Roy Dong, and Katherine Driggs-Campbell}
\authorblockA{Department of Electrical and Computer Engineering, University of Illinois Urbana-Champaign}}


%

\maketitle

\begin{abstract}
The number of multi-robot systems deployed in field applications has increased dramatically over the years. Despite the recent advancement of navigation algorithms, autonomous robots often encounter challenging situations where the control policy fails and the human assistance is required to resume robot tasks. Human-robot collaboration can help achieve high-levels of autonomy, but monitoring and managing multiple robots at once by a single human supervisor remains a challenging problem. Our goal is to help a supervisor decide which robots to assist in which order such that the team performance can be maximized. We formulate the one-to-many supervision problem in uncertain environments as a dynamic graph traversal problem. An approximation algorithm based on the profitable tour problem on a static graph is developed to solve the original problem, and the approximation error is bounded and analyzed. Our case study on a simulated autonomous farm demonstrates superior team performance than baseline methods in task completion time and human working time, and that our method can be deployed in real-time for robot fleets with moderate size.
\end{abstract}

\IEEEpeerreviewmaketitle

\section{Introduction}
Multi-robot systems have been applied to different tasks, such as search and rescue~\cite{drew2021multi,liu2013robotic}, delivery~\cite{camisa2021multi,mathew2015optimal}, mechanical weeding~\cite{mcallister2019agbots}, and underwater missions~\cite{das2016cooperative,kulkarni2010task}. Although recent research efforts have made noteworthy progress on developing trustworthy robot autonomy~\cite{kayacan2018embedded,sivakumar2021learned,zhang2020high}, the deployment of low-cost robots in real-world environments has shown that they usually face difficulties in completing the tasks independently~\cite{rosenfeld2017intelligent}. For example, due to the environmental complexity and terrain variability in agricultural environments, compact field robots (Figure~\ref{subfig:terrasentia}) deployed between rows of crops may fail the navigation task and get stuck in error states (Figure~\ref{subfig:robot-failure}), in which physical assistance from a human supervisor is required to continue the robot task~\cite{ji2020multi,ji2022proactive}. To ensure the smooth operation of such multi-robot systems, the supervision and management of robot fleets are necessary in the presence of imperfect autonomy. However, optimally identifying which robot to assist in which order in an uncertain environment is a challenging task for human supervisors. As shown in recent user studies on operating multi-robot teams, a single operator can get overwhelmed by the number of requests and messages, resulting in sub-optimal performance~\cite{chen2011effects,chien2013imperfect,lewis2013human}. Moreover, the increase of the number of robots can cause further performance degradation of the human-robot team~\cite{olsen2004fan}.

As opposed to purely relying on the supervisor to make decisions, algorithmic approaches have been proposed to provide advice on which robot to assist and when~\cite{drew2021multi}. Many previous works modeled the supervision problem of multi-robot systems as a Markov Decision Process (MDP)~\cite{dahiya2021scalable,rosenfeld2017intelligent}. However, calculating the optimal solution to such MDPs is intractable due to the exponential size of the state space, the high uncertainty induced by the environment, and the complexity of multi-robot problems~\cite{korsah2013comprehensive,kumar2012survey}. Instead, myopic/greedy approaches were used under the assumption that the robots can be managed through \textit{teleoperation}. One benefit that \textit{teleoperation} brings is that which robot the supervisor chooses to assist at the current time step will not affect the cost of assisting other robots at the next time step. However, in robotic applications where \textit{physical} assistance is required, approaching and rescuing a robot can change the cost of rescuing other robots as the physical distance between the supervisor and each robot varies with the action of the supervisor. As a result, the local optimum of \textit{a robot}, provided by myopic/greedy approaches, can be significantly different from the global optimum of a \textit{robot fleet}.

An alternative solution is to learn rational human decisions on supervision problem in easy settings with a few robots and then deploy the learned model in hard settings to model the user's choice of which robot to assist~\cite{swamy2020scaled,xu2016towards}. The underlying assumption behind this approach is that the human decision is approximately optimal in the sense of maximizing the performance of the human-robot team. However, such human decision modeling also benefits from the \textit{teleoperation} of robot fleets: the supervisor can easily decide which robot to assist solely based on how useful it would be to assist each robot without the need to consider the cost of assistance, which is made constant for each robot by \textit{teleoperation}. With varying cost in robotic systems where \textit{physical} assistance is required, the supervisor can struggle with making optimal decisions that can benefit the robot fleet as a whole.

\begin{figure}[t]
  \centering
  \begin{subfigure}[b]{0.49\linewidth}
    \includegraphics[width=\linewidth]{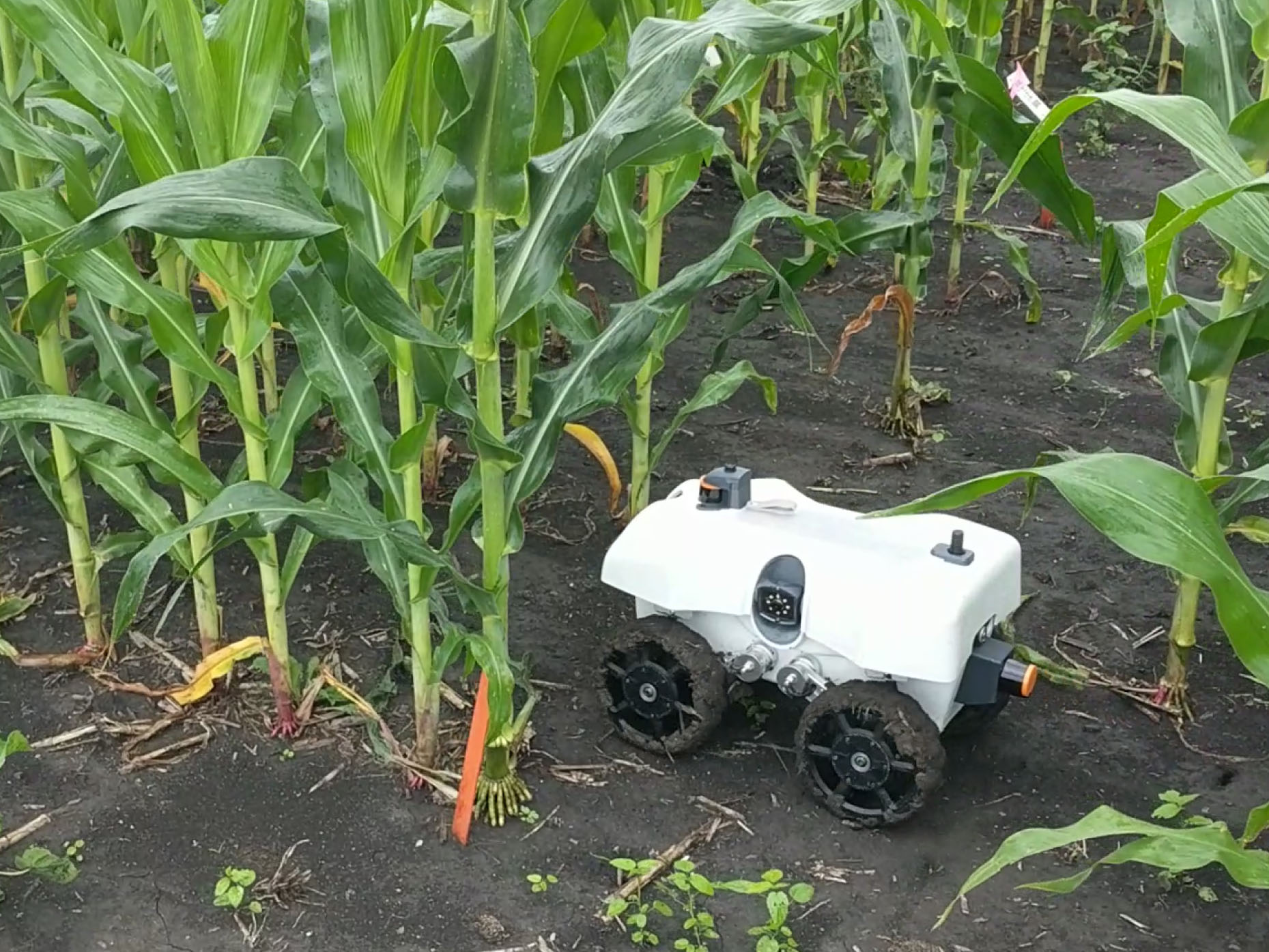}
    \caption{TerraSentia robot.}
    \label{subfig:terrasentia}
  \end{subfigure}
  \begin{subfigure}[b]{0.49\linewidth}
    \includegraphics[width=\linewidth]{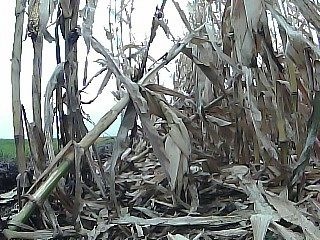}
    \caption{Untraversable obstacle.}
    \label{subfig:robot-failure}
  \end{subfigure}
  \caption{\textbf{(a)} The field robot, TerraSentia~\cite{kayacan2018embedded}, navigates between rows of crops for plant phenotyping. \textbf{(b)} Robot navigation failure may occur due to untraversable obstacles, such as fallen plants and dense weeds.}
  \label{fig:field-robot-example}
  \vspace{-3mm}
\end{figure}

A typical example in which \textit{physical} assistance is required involves robot navigation in uncertain environments (e.g., agricultural fields). \textit{Teleoperation} is inadequate and/or impossible for robot assistance for two main reasons. First, a common failure mode of such field robots is encountering untraversable obstacles blocking the narrow path (Figure~\ref{subfig:robot-failure}), the removal of which is beyond the robot autonomy and requires external assistance~\cite{ji2020multi}. Second, successful teleoperation often requires rich and detailed feedback signals (e.g., videos and images) from the robots, which is not supported in low-cost multi-robot systems or requires the relocation of robots to a given rendezvous point due to the limited bandwidth of the long range communication network in outdoor environments~\cite{waharte2010supporting,lam2017lora}.

In this paper, we formulate the one-to-many supervision problem of a multi-robot system, to our knowledge for the first time, as a dynamic graph traversal problem (GTP). The supervisor is monitoring the fleet to provide \textit{physical} assistance to each robot when necessary. During operation, a robot may fail the navigation task at each location with different probabilities and can only resume the autonomy with assistance from the supervisor. Figure~\ref{fig:system-overview} presents an overview of the problem setup instantiated on an autonomous farm, showing $n$ robots navigating in the field, moving from start to goal locations respectively. To help the supervisor decide which robot to rescue, we construct a static graph with well-defined node rewards and edge costs from the current state of the human-robot team, solve the optimal path based on an objective maximizing the team performance, and execute the optimal actions for the supervisor in a receding horizon manner. Specifically, our contributions can be summarized as:
\begin{enumerate}
\item
We formulate the supervision of a multi-robot system in uncertain environments as a dynamic graph traversal problem, which is necessary in real-world applications where \textit{teleoperation} is disabled and \textit{physical} assistance is required.
\item
We approximate the solution to the intractable \textit{dynamic} problem by solving a \textit{static} graph traversal problem and develop a bound on the approximation error.
\item
A practical implementation of the static GTP is presented based on the profitable tour problem (PTP).
\item
A simulation environment of an autonomous farm is developed and our case study shows that the proposed method outperforms baseline methods in task completion time and human working time during the human-robot collaboration tasks.
\end{enumerate}

\begin{figure}[t]
  \centering
  \includegraphics[width=0.8\linewidth]{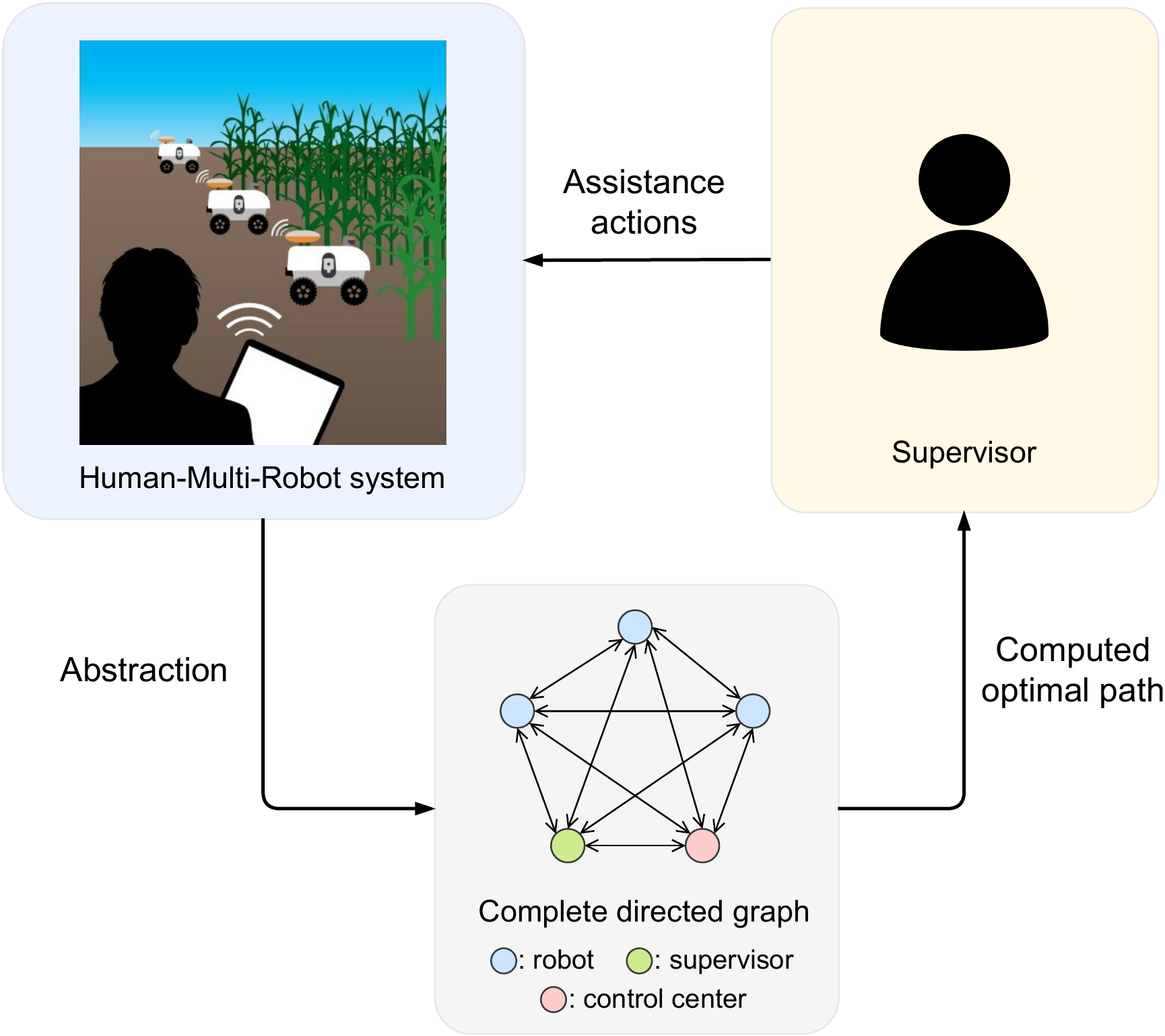}
  \caption{\textbf{Overview of the multi-robot assistance problem for field robots navigating on an autonomous farm.} A complete directed graph is constructed from the current state of the human-robot team, and the optimal path is computed and provided to the supervisor as the actions for robot assistance.}
  \label{fig:system-overview}
  \vspace{-3mm}
\end{figure} 
\section{Related Work}
The problem of multi-robot assistance bares similarities with the disciplines of human-multi-robot team collaboration and task allocation. In this section, we briefly review the related research and introduce the profitable tour problem.

Multi-robot supervision has been widely explored to enable human operators to assist multiple robots such as a team of navigating robots and a team performing search and rescue operations. \citet{rosenfeld2017intelligent} formulate the one-to-many supervision problem as a Myopic Advice Optimization Problem (MYAO), which models the maximization of the operator's performance by selecting when and which advice on robot assistance provided to the operator in a greedy fasion. \citet{dahiya2021scalable} study the problem where each robot is required to complete a sequence of tasks, each characterized by a different probability of successful completion. An index-based policy is then developed to provide advice on which robot requires attention. \citet{swamy2020scaled} learns the decision-making model of human operator with small robot fleets and tries to replicate the behavior for large robot fleets, optimizing based on the operator's internal utility function. However, the problem formulation of these approaches depends on the availability of teleoperation of robot fleets and does not encompass situations where \textit{physical} assistance is required.

The task allocation problem forms another class of problems similar to the supervision problem, which can alternatively be viewed as allocating the supervisor to a series of robots in order. ~\citet{lee2018resource} considers the problem of allocating multiple robots to multiple tasks (e.g., delivery missions) and proposes a resource-based task allocation algorithm to improve the overall task completion time. ~\citet{liu2018reinforcement} proposes a reinforcement-learning-based resource allocation scheme to efficiently allocate the resources of a cloud computing service platform to different requests. However, our problem setup is different because the cost of allocation/supervision varies over time with the movement of each robot, and our problem presents a collaborative task, where both the robots and the supervisor are working to achieve a common goal.

The closest prior work that involves stochastic and non-stationary rewards and costs in task allocation for multi-robot systems is Gittins-Index-based approach used in mechanical robotic weeding, where a team of autonomous robots are allocated to different rows to weed agricultural fields~\cite{mcallister2019agbots}. The reward of weeding a row varies over time according to weed growth models, and the cost of each allocation is dependent on the distance from the robot to the candidate row. To allocate weeding tasks to each robot, an optimization-based approach over a \textit{one-step} planning horizon using Gittins Index is utilized. In this work, we perform \textit{long-horizon} planning by constructing a graph from the human-robot team and provide advice to the supervisor in a receding horizon manner.

The profitable tour problem is a variant of the well-known traveling salesman problem (TSP), which is also referred to as prize collecting TSP~\cite{balas1989prize}. The objective is to find a tour that visits a subset of vertices such that the sum of rewards associated with vertices in the tour minus the length of the tour is as large as possible. In addition to exact solutions, approximation algorithms have been proposed to solve the PTP under the assumption that the edge costs satisfy the triangle inequality~\cite{archer2011improved,bienstock1993note,goemans1995general,nguyen2010primal}. However, the formulation of the PTP considers time-invariant node rewards and edge costs, and it remains unclear how well the solution on such static graphs generalizes to more practical dynamic graphs.
\section{Problem Formulation}
\label{sec:problem-formulation}
Our goal is to help the supervisor choose which robot to rescue at any given time. To this end, we formulate the one-to-many supervision over a robot fleet as a dynamic graph traversal problem.

We model the human-robot team as a directed graph $G=(V,A)$, where $V = \{0,\dots,n + 1\}$ is the set of vertices and $A$ is the set of arcs. Vertices in $N = V \backslash \{0, n+1\} = \{1,\dots,n\}$ correspond to the robots, vertex $0$ corresponds to the supervisor, and vertex $n+1$ corresponds to the control center, in which the supervisor rests when not assisting any robots. As a result, a path to rescue the robots should start from the supervisor and end with the control center. Formally, we define a \textit{path} $\pi \coloneqq (v_k)_{k=0}^{L}$ in the graph as a sequence of \textit{vertices} $v_k \in V$, where $L$ is the length of the path. Henceforth, we use $v_k^\pi$ and $a_k^\pi \coloneqq (v_k^\pi, v_{k+1}^\pi)$ to denote the $k$-th vertex and the $k$-th arc on the path $\pi$, respectively. In our planning problem, a path $\pi$ is a \textit{valid} path if $v_0^\pi = 0$ and $v_L^\pi = n + 1$.

A \textit{time-varying} nonnegative scalar reward $r_i(t) \in \mathbb{R}$ is associated with each robot $i \in N$, while a reward $r_0(t)=0, \, \forall t \ge 0$ and $r_{n+1}(t)=0, \, \forall t \ge 0$ is associated with the supervisor and the control center, respectively. A \textit{time-varying} nonnegative scalar cost $c_{ij}(t) \in \mathbb{R}$ is associated with each arc $(i,j) \in A$. Here, we define the reward $r_i(t)$ for each robot as the \textit{net} expected distance that the robot can travel with supervisor intervention, i.e. the reward is the expected distance that the robot can travel if visited, minus the distance if ignored by the supervisor. The definition of the reward reflects the ``value" of visiting a robot at a given time. Thus, the reward $r_i(t), i \in N$ is positive only when the robot fails the navigation task and is kept zero during normal operation. We define the cost $c_{ij}(t)$ as the time required to travel between the two nodes $i, j \in V$ in the graph\footnote{Note that the graph $G$ is defined as a directed graph, in which the edge cost can be generalized to be asymmetric, i.e., $c_{ij}(t) \neq c_{ji}(t), \, \forall (i, j) \in A$.}. We note that our problem formulation differs from the classical PTP and TSP in that the rewards and costs are dynamic. For notational brevity, we define $t_k^\pi$ to represent the time at which the supervisor reaches the $k$-th vertex $v_k^\pi$ on the path $\pi$:
\begin{equation}
t_k^\pi = \sum_{i=0}^{k-1} c_{a_i^\pi}(t_i^\pi), \quad t_0^\pi = 0, \quad k = 0, 1, \dots, L.
\end{equation}

To plan a path to rescue the robots with minimum time cost, we define the value of a path $\pi$ by:
\begin{equation}
\label{eq:value-function-tv-complete}
Q^\pi = \sum_{k=0}^{L} e^{-\lambda t_k^\pi} r_{v_k^\pi} (t_k^\pi) - \sum_{k=0}^{L-1} e^{-\lambda t_k^\pi} \mu c_{a_k^\pi} (t_k^\pi),
\end{equation}
where $\mu$ is the weighting factor and $\lambda$ is the discount factor. The first term in the value function~(\ref{eq:value-function-tv-complete}) collects the total discounted reward by visiting the robots on the path, while the second term represents the total discounted traveling cost of the supervisor. The hyperparameter $\mu$ controls the relative weight between the value of rescuing the robot and the human labor cost. A higher unit profit from engaging the robot autonomy and a lower human labor cost can result in a lower $\mu$, and vice versa. For example, if the human labor cost is negligible compared to the profit made by a normal working robot, the edge cost incurred by adding a robot to the path can be ignored and thus the optimal path must contain all the robots that have positive rewards. Henceforth, we use $\hat{r}_i(t)$ and $\hat{c}_{ij}(t)$ to denote $r_i(t)$ and $\mu c_{ij}(t)$, respectively, for the brevity of notation. The value function~(\ref{eq:value-function-tv-complete}) can be rewritten in the form:
\begin{equation}
\label{eq:value-function-tv}
Q^\pi = \sum_{k=0}^{|\pi|} e^{-\lambda t_k^\pi} \hat{r}_{v_k^\pi} (t_k^\pi) - \sum_{k=0}^{|\pi|-1} e^{-\lambda t_k^\pi} \hat{c}_{a_k^\pi} (t_k^\pi),
\end{equation}
where $|\pi|$ denotes the length of the path. We assume that the reward of each robot can be collected at most once. A more general case where each robot can be visited repeatedly over time will be discussed in Section~\ref{sec:method}.

The optimization problem of interest is to find a \textit{valid} path for the supervisor that maximizes the value function~(\ref{eq:value-function-tv}):
\begin{equation}
\label{prob:dynamic-gtp}
\begin{aligned}
\max_{\pi} \quad &Q^\pi \\
\text{s.t.} \quad &v_i^\pi \neq v_j^\pi, \quad \forall \, i,j \in \{0,\dots,|\pi|\}, \, i \neq j \\
&v_0^\pi = 0, \; v_{|\pi|}^\pi = n+1.
\end{aligned}
\end{equation}
Let $\pi^*$ be the optimal path in the optimization problem~(\ref{prob:dynamic-gtp}).

To our knowledge, our work is the first to consider a graph traversal problem with time-varying rewards and costs, and no exact approaches to the \textit{dynamic} GTP~(\ref{prob:dynamic-gtp}) exists other than the enumeration over the space of all paths. In the following section, we approximate the intractable \textit{dynamic} GTP~(\ref{prob:dynamic-gtp}) to a \textit{static} GTP, whose solution can be obtained efficiently using mixed-integer programming~\cite{gurobi}.
\section{Methodology}
\label{sec:method}
In this section, we present an approximation algorithm for the dynamic GTP~(\ref{prob:dynamic-gtp}) based on the solution to the profitable tour problem. The bound on the approximation error is developed and the practical implementation will be detailed.

\subsection{Static GTP Approximation}
We now introduce a \textit{static} graph traversal problem, whose node rewards and edge costs are time invariant. We consider the directed graph $G$ as defined in Section~\ref{sec:problem-formulation}. A \textit{constant} nonnegative reward $r_i \in \mathbb{R}$ is associated with each robot $i \in N$, while a reward $r_0=0$ and $r_{n+1}=0$ is associated with the supervisor and the control center, respectively. A \textit{constant} nonnegative traveling cost $c_{ij} \in \mathbb{R}$ is associated with each arc $(i,j) \in A$. The definition of the node reward and the edge cost are identical to those in Section~\ref{sec:problem-formulation}.

For the \textit{static} graph traversal problem, we define the value of a path $\pi$ as:
\begin{equation}
\label{eq:value-function-static}
Q_\text{s}^\pi = \sum_{k=0}^{|\pi|} \hat{r}_{v_k^\pi} - \sum_{k=0}^{|\pi|-1} \hat{c}_{a_k^\pi},
\end{equation}
where $\hat{r}_i$ and $\hat{c}_{ij}$ are the node rewards and scaled edge costs as defined in equation~(\ref{eq:value-function-tv}). Similar to the value function in the dynamic GTP, the objective is to maximize the reward collected and minimize the path cost. Under the same assumption that the reward of each robot can be collected at most once, the optimization problem for the \textit{static} GTP can be formulated as:
\begin{equation}
\label{prob:static-gtp}
\begin{aligned}
\max_{\pi} \quad &Q_\text{s}^\pi \\
\text{s.t.} \quad &v_i^\pi \neq v_j^\pi, \quad \forall \, i,j \in \{0,\dots,|\pi|\}, \, i \neq j \\
&v_0^\pi = 0, \; v_{|\pi|}^\pi = n+1.
\end{aligned}
\end{equation}
Let $\pi_\text{s}^*$ be the optimal path in the optimization problem~(\ref{prob:static-gtp}).

To overcome the intractability of the original dynamic problem, we approximate the solution to problem~(\ref{prob:dynamic-gtp}) by solving the static GTP~(\ref{prob:static-gtp}) based on the following assumptions:
\begin{assumption}
\label{asm:1}
In the dynamic GTP, the change rate of the node reward and edge cost are bounded above by $|\hat{r}_i(t) - \hat{r}_i(0)| \le \alpha t, \, |\hat{c}_{ij}(t) - \hat{c}_{ij}(0)| \le \beta t, \, \forall \, i,j \in V, \, \forall \, t \ge 0$, where $\alpha$ and $\beta$ are nonnegative real numbers. In the static GTP, the difference between the node reward and the edge cost in the graph is bounded above by $\max_{i,j} |\hat{r}_i - \hat{c}_{ij}| \le \epsilon, \, \forall \, i,j \in V$.
\end{assumption}

\begin{assumption}
\label{asm:2}
The traveling time $\Delta t$ between any two nodes in the graph $G$ is bounded above by $\Delta t \le \overline{\Delta t}$.
\end{assumption}

In practice, $\alpha$ and $\beta$ in Assumption~\ref{asm:1} bound the rates at which the reward and cost can change due to the movement of the robot over time, and Assumption~\ref{asm:2} suggests that the field on which the robots can navigate has a limited size. We construct the \textit{static} graph as the initial condition of the \textit{dynamic} graph, i.e., $\hat{r}_i = \hat{r}_i(0), \, \hat{c}_{ij} = \hat{c}_{ij}(0), \, \forall \, i, j \in V$. We now bound the approximation error of the optimal solution of the static graph traversal problem~(\ref{prob:static-gtp}):
\begin{theorem}
\label{theorem:1}
Consider the dynamic graph traversal problem~(\ref{prob:dynamic-gtp}) and the static graph traversal problem~(\ref{prob:static-gtp}). Let $Q^{\pi^*}$ and $Q_\text{s}^{\pi_\text{s}^*}$ be the maximum value of $Q^{\pi}$ and $Q_\text{s}^{\pi}$ over all paths, respectively. The solution to the static problem approximates the solution to the dynamic problem with the bound:
\begin{equation}
\label{ineq:theorem}
|Q_\text{s}^{\pi_\text{s}^*} - Q^{\pi^*}|
\le
(\alpha + \beta) \frac{n+1}{\lambda e} + \sum_{k=0}^{n} (1 - e^{-\lambda k \overline{\Delta t}}) \epsilon,
\end{equation}
where $n$ is the number of robots.
\end{theorem}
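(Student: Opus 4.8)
The plan is to first establish a \emph{uniform} per-path estimate $|Q^\pi - Q_\text{s}^\pi| \le B$ valid for \emph{every} valid path $\pi$, where $B$ denotes the right-hand side of~\eqref{ineq:theorem}, and then pass to the optimal values by a standard argument. Since $\pi^*$ maximizes $Q^\pi$ and $\pi_\text{s}^*$ maximizes $Q_\text{s}^\pi$ over the same feasible set, we get $Q_\text{s}^{\pi_\text{s}^*} \ge Q_\text{s}^{\pi^*} \ge Q^{\pi^*} - B$ and $Q^{\pi^*} \ge Q^{\pi_\text{s}^*} \ge Q_\text{s}^{\pi_\text{s}^*} - B$, hence $|Q_\text{s}^{\pi_\text{s}^*} - Q^{\pi^*}| \le B$. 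Thus the whole theorem reduces to the per-path bound.

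For a fixed valid path $\pi = (v_k)_{k=0}^{L}$ with $L = |\pi|$, I would subtract~\eqref{eq:value-function-static} from~\eqref{eq:value-function-tv} and group the $k$-th discounted reward with the $k$-th discounted cost for $k = 0,\dots,L-1$; the leftover terminal reward $e^{-\lambda t_L^\pi}\hat r_{v_L^\pi}(t_L^\pi) - \hat r_{v_L^\pi}$ vanishes because $v_L^\pi = n+1$ and $r_{n+1}\equiv 0$. Adding and subtracting $e^{-\lambda t_k^\pi}(\hat r_{v_k^\pi} - \hat c_{a_k^\pi})$ inside each grouped term splits it into (i)~$e^{-\lambda t_k^\pi}(\hat r_{v_k^\pi}(t_k^\pi) - \hat r_{v_k^\pi})$, (ii)~$-e^{-\lambda t_k^\pi}(\hat c_{a_k^\pi}(t_k^\pi) - \hat c_{a_k^\pi})$, and (iii)~$(e^{-\lambda t_k^\pi} - 1)(\hat r_{v_k^\pi} - \hat c_{a_k^\pi})$. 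By \cref{asm:1} and the elementary inequality $t e^{-\lambda t} \le \tfrac{1}{\lambda e}$ (the maximum of $t \mapsto t e^{-\lambda t}$, attained at $t = 1/\lambda$), the absolute values of (i) and (ii) are at most $\tfrac{\alpha}{\lambda e}$ and $\tfrac{\beta}{\lambda e}$ respectively; since $0 < e^{-\lambda t_k^\pi} \le 1$, the absolute value of (iii) is at most $(1 - e^{-\lambda t_k^\pi})\,\epsilon$, this time invoking the $\epsilon$-bound on $|\hat r_i - \hat c_{ij}|$ rather than the (possibly large) individual magnitudes.

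It then remains to remove the dependence on $\pi$ in the factor $1 - e^{-\lambda t_k^\pi}$. From $t_k^\pi = \sum_{i=0}^{k-1} c_{a_i^\pi}(t_i^\pi)$ and \cref{asm:2} we obtain $t_k^\pi \le k\,\overline{\Delta t}$, hence $1 - e^{-\lambda t_k^\pi} \le 1 - e^{-\lambda k \overline{\Delta t}}$ by monotonicity of $x \mapsto 1 - e^{-x}$. Summing the per-index bounds over $k$ and using that a valid path has $L = |\pi| \le n+1$ (it starts at $0$, ends at $n+1$, and visits a subset of the $n$ robot vertices in between without repetition) leaves at most $n+1$ surviving terms, so $|Q^\pi - Q_\text{s}^\pi| \le (\alpha + \beta)\tfrac{n+1}{\lambda e} + \sum_{k=0}^{n}(1 - e^{-\lambda k \overline{\Delta t}})\epsilon = B$ (the $k=0$ summand being $0$, kept only for a cleaner index range). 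Combined with the reduction above, this proves the theorem.

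I expect the main obstacle to be the bookkeeping of the per-path decomposition: in particular, choosing the reward--cost pairing so that the small quantity $\epsilon = \max_{i,j}|\hat r_i - \hat c_{ij}|$ (and not the individual rewards or costs) controls the discount-deviation piece (iii), and counting the surviving terms carefully so that the leading constant is exactly $n+1$. The remaining ingredients --- the two scalar inequalities $t e^{-\lambda t} \le 1/(\lambda e)$ and $t_k^\pi \le k\overline{\Delta t}$, and the maximum-of-pointwise-close-functions step --- are routine.
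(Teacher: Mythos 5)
Your proposal is correct and follows essentially the same route as the paper: the same per-index decomposition (adding and subtracting the term $(1-e^{-\lambda t_k^\pi})(\hat r_{v_k^\pi}-\hat c_{a_k^\pi})$ so that $\epsilon$ controls the discount-deviation piece and $\alpha,\beta$ control the drift pieces via $t e^{-\lambda t}\le 1/(\lambda e)$ and $t_k^\pi \le k\overline{\Delta t}$), followed by the same suboptimality sandwich $Q_\text{s}^{\pi^*}\le Q_\text{s}^{\pi_\text{s}^*}$, $Q^{\pi_\text{s}^*}\le Q^{\pi^*}$. The only cosmetic difference is that you state the estimate as a uniform per-path bound before specializing, whereas the paper applies it directly to $\pi_\text{s}^*$ and $\pi^*$; the content is identical.
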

\begin{proof}
For any valid path $\pi$, the value function~(\ref{eq:value-function-tv}) and~(\ref{eq:value-function-static}) can be written as:
\begin{align*}
Q^{\pi}
&=
\sum_{k=0}^{|\pi|-1} e^{-\lambda t_k^{\pi}} \left( \hat{r}_{v_k^{\pi}} (t_k^{\pi}) - \hat{c}_{a_k^{\pi}} (t_k^{\pi}) \right), \\
Q_\text{s}^\pi
&=
\sum_{k=0}^{|\pi|-1} \left( \hat{r}_{v_k^\pi} - \hat{c}_{a_k^\pi} \right),
\end{align*}
by the fact that the terminal node on a valid path must be the control center, whose reward is always zero.

By applying $\pi_\text{s}^*$, which is the optimal solution to~(\ref{prob:static-gtp}), to both the static and dynamic problem, we have:
\begin{equation}
\label{eq:proof-1}
\begin{aligned}
|Q_\text{s}^{\pi_\text{s}^*} - Q^{\pi_\text{s}^*}|
&=
\left| \sum_{k=0}^{|\pi_\text{s}^*|-1} \left( \hat{r}_{v_k^{\pi_\text{s}^*}} - e^{-\lambda t_k^{\pi_\text{s}^*}} \hat{r}_{v_k^{\pi_\text{s}^*}}(t_k^{\pi_\text{s}^*}) \right) \right. \\
&\qquad \quad \, \left. - \left( \hat{c}_{a_k^{\pi_\text{s}^*}} - e^{-\lambda t_k^{\pi_\text{s}^*}} \hat{c}_{a_k^{\pi_\text{s}^*}}(t_k^{\pi_\text{s}^*}) \right) \right|.
\end{aligned}
\end{equation}
By Assumption~\ref{asm:1}, we have
$
\left| \hat{r}_{v_k^{\pi_\text{s}^*}}(t_k^{\pi_\text{s}^*}) - \hat{r}_{v_k^{\pi_\text{s}^*}} \right| \le \alpha t_k^{\pi_\text{s}^*}
$, which is equivalent to:
\begin{equation*}
\left| \hat{r}_{v_k^{\pi_\text{s}^*}} - e^{-\lambda t_k^{\pi_\text{s}^*}} \hat{r}_{v_k^{\pi_\text{s}^*}}(t_k^{\pi_\text{s}^*}) - \hat{r}_{v_k^{\pi_\text{s}^*}} ( 1 - e^{-\lambda t_k^{\pi_\text{s}^*}} ) \right|
\le 
e^{-\lambda t_k^{\pi_\text{s}^*}} \alpha t_k^{\pi_\text{s}^*},
\end{equation*}
indicating that the first term on the R.H.S. of the equation~(\ref{eq:proof-1}) is centered around $\hat{r}_{v_k^{\pi_\text{s}^*}} ( 1 - e^{-\lambda t_k^{\pi_\text{s}^*}} )$ within a range of $e^{-\lambda t_k^{\pi_\text{s}^*}} \alpha t_k^{\pi_\text{s}^*}$. Similarly, for the edge cost, we have:
\begin{equation*}
\left| \hat{c}_{a_k^{\pi_\text{s}^*}} - e^{-\lambda t_k^{\pi_\text{s}^*}} \hat{c}_{a_k^{\pi_\text{s}^*}}(t_k^{\pi_\text{s}^*}) - \hat{c}_{a_k^{\pi_\text{s}^*}} ( 1 - e^{-\lambda t_k^{\pi_\text{s}^*}} ) \right|
\le 
e^{-\lambda t_k^{\pi_\text{s}^*}} \beta t_k^{\pi_\text{s}^*},
\end{equation*}
indicating that the second term on the R.H.S of the equation~(\ref{eq:proof-1}) is centered around $\hat{c}_{a_k^{\pi_\text{s}^*}} ( 1 - e^{-\lambda t_k^{\pi_\text{s}^*}} )$ within a range of $e^{-\lambda t_k^{\pi_\text{s}^*}} \beta t_k^{\pi_\text{s}^*}$. Therefore, from equation~(\ref{eq:proof-1}) we have:
\begin{equation*}
\begin{aligned}
&|Q_\text{s}^{\pi_\text{s}^*} - Q^{\pi_\text{s}^*}|
\le
\sum_{k=0}^{|\pi_\text{s}^*|-1} \left| \left( \hat{r}_{v_k^{\pi_\text{s}^*}} - e^{-\lambda t_k^{\pi_\text{s}^*}} \hat{r}_{v_k^{\pi_\text{s}^*}}(t_k^{\pi_\text{s}^*}) \right) \right. \\
&\qquad \qquad \qquad \qquad \qquad \qquad \left. - \left( \hat{c}_{a_k^{\pi_\text{s}^*}} - e^{-\lambda t_k^{\pi_\text{s}^*}} \hat{c}_{a_k^{\pi_\text{s}^*}}(t_k^{\pi_\text{s}^*}) \right) \right| \\
&\le
\sum_{k=0}^{|\pi_\text{s}^*|-1} ( 1 - e^{-\lambda t_k^{\pi_\text{s}^*}}) \left| \hat{r}_{v_k^{\pi_\text{s}^*}} - \hat{c}_{a_k^{\pi_\text{s}^*}} \right| + (\alpha + \beta) e^{-\lambda t_k^{\pi_\text{s}^*}} t_k^{\pi_\text{s}^*}
\end{aligned}
\end{equation*}
Consider the function $f(x) = e^{-\lambda x} x, \, x \ge 0$. By taking the derivative to zero, we observe that $f(x)$ reaches a maximum of $1/(\lambda e)$ at $x=1/\lambda$. Therefore, we have:
\begin{equation}
\label{ineq:proof-1}
\begin{aligned}
|Q_\text{s}^{\pi_\text{s}^*} - Q^{\pi_\text{s}^*}|
&\le
(\alpha + \beta) \frac{n + 1}{\lambda e} \\ 
&\qquad \qquad \quad + \sum_{k=0}^n ( 1 - e^{-\lambda t_k^{\pi_\text{s}^*}}) \left| \hat{r}_{v_k^{\pi_\text{s}^*}} - \hat{c}_{a_k^{\pi_\text{s}^*}} \right| \\
&\le
(\alpha + \beta) \frac{n + 1}{\lambda e} + \sum_{k=0}^n ( 1 - e^{-\lambda t_k^{\pi_\text{s}^*}}) \epsilon \\
&\le
(\alpha + \beta) \frac{n + 1}{\lambda e} + \sum_{k=0}^n ( 1 - e^{-\lambda k \overline{\Delta t}}) \epsilon,
\end{aligned}
\end{equation}
where the second inequality follows from Assumption~\ref{asm:1} and the last inequality follows from Assumption~\ref{asm:2} and the fact that $t_k^{\pi_\text{s}^*} \le k \overline{\Delta t}$.

Similarly, by applying $\pi^*$, which is the optimal solution to~(\ref{prob:dynamic-gtp}), to both the static and dynamic problem, we have:
\begin{equation}
\label{ineq:proof-2}
|Q_\text{s}^{\pi^*} - Q^{\pi^*}|
\le
(\alpha + \beta) \frac{n + 1}{\lambda e} + \sum_{k=0}^n ( 1 - e^{-\lambda k \overline{\Delta t}}) \epsilon.
\end{equation}
Note that $\pi^*$ is suboptimal to the static problem~(\ref{prob:static-gtp}) and $\pi_\text{s}^*$ is suboptimal to the dynamic problem~(\ref{prob:dynamic-gtp}):
\begin{equation}
\label{ineq:proof-3}
Q_\text{s}^{\pi^*} \le Q_\text{s}^{\pi_\text{s}^*}, \quad Q^{\pi_\text{s}^*} \le Q^{\pi^*}.
\end{equation}
From the inequality~(\ref{ineq:proof-1}),~(\ref{ineq:proof-2}), and~(\ref{ineq:proof-3}), we conclude that:
\begin{equation*}
|Q_\text{s}^{\pi_\text{s}^*} - Q^{\pi^*}|
\le
(\alpha + \beta) \frac{n+1}{\lambda e} + \sum_{k=0}^{n} (1 - e^{-\lambda k \overline{\Delta t}}) \epsilon.
\end{equation*}
\end{proof}

An interesting fact that we observe on the bound developed in Theorem~\ref{theorem:1} is that the approximation error is separated into two parts and the discount factor $\lambda$ creates a tension between the two terms on the right hand side of the inequality~(\ref{ineq:theorem}). The first term encapsulates the error generated by the time-varying nature of the rewards and costs in the dynamic problem and can be damped by a large $\lambda$, by applying which the further nodes on the path could make less contribution to the value function compared to the nearer nodes. The second term represents the error originated from the fact that $\pi^*$ is optimized for discounted rewards while $\pi_\text{s}^*$ is optimized for undiscounted rewards and can be damped by a small $\lambda$. Consider a dynamic graph with a change rate of zero, i.e., $\alpha=\beta=0$. The optimal solution to the static and dynamic problem can still be different due to the different definitions of the value function. With an additional condition that $\lambda=0$, however, the value function~(\ref{eq:value-function-tv}) and~(\ref{eq:value-function-static}) of the two problems will be identical and the gap between the optimal solution $Q_\text{s}^{\pi_\text{s}^*}$ and $Q^{\pi^*}$ will be zero, which is also verified by the bound in Theorem~\ref{theorem:1}.

We note that in real-world applications, robots may fail repeatedly, which, in our setting, means that robots can be visited multiple times for reward. Formally, the optimal path may contain loops. However, if the robot autonomy is relatively reliable with rare navigation failures, the second visit to a robot will be necessary only after a long time since the first visit, i.e., $t_{k_1}^{\pi^*} \gg t_{k_2}^{\pi^*}, \, \forall \, k_1, k_2 \in \{0, \dots, |\pi^*|\}, \, k_1 \neq k_2, \, v_{k_1}^{\pi^*} = v_{k_2}^{\pi^*}$. As a result, the discounted reward collected after the time at which the loop is formed is negligible according to equation~(\ref{eq:value-function-tv}). Therefore, $Q_\text{s}^{\pi_\text{s}^*}$ can still approximate the optimal value function of the dynamic graph traversal problem~(\ref{prob:dynamic-gtp}) well with the bound given by equation~(\ref{ineq:theorem}).

\subsection{Profitable Tour Problem}
\label{subsec:ptp}
Although the optimization in the static GTP is still performed over the space of all paths, problem~(\ref{prob:static-gtp}) can be formulated as a profitable tour problem.

The PTP is the problem of finding a route that maximizes the difference between the total collected reward and the total traveling cost. In the remaining part of this section, we first introduce some useful notations and then formulate the static GTP~(\ref{prob:static-gtp}) as a modified PTP.

For any subset of vertices $S \subset V$, we define $\delta^+ (S) = \{ (i, j) \in A: i \in S, j \notin S \}$ and $\delta^- (S) = \{ (i, j) \in A: i \notin S, j \in S \}$. For brevity, we will use the notation $\delta^+ (i)$ and $\delta^- (i)$ when $S=\{ i \}$. We define the variable $y_i$ as a binary variable equal to $1$ if vertex $i \in V$ is visited by the path, and $0$ otherwise. Similarly, we define the variable $x_{ij}$ as a binary variable equal to $1$ if arc $(i, j) \in A$ is traversed by the supervisor, and $0$ otherwise.

The mathematical programming formulation of the static GTP~(\ref{prob:static-gtp}) is the following:
\begin{align}
\label{eq:ptp-obj}
\max &\quad \sum_{i \in V} \hat{r}_i y_i - \sum_{(i, j) \in A} \hat{c}_{ij} x_{ij} \\
\label{eq:ptp-robot-leaving-cons}
\text{s.t.} \quad &\sum_{(i, j) \in \delta^+ (i)} x_{ij} = y_i, \quad \forall \, i \in N, \\
\label{eq:ptp-robot-entering-cons}
&\sum_{(j, i) \in \delta^- (i)} x_{ji} = y_i, \quad \forall \, i \in N, \\
\label{eq:ptp-human-cons}
&\sum_{(i, j) \in \delta^+ (i)} x_{ij} = 1, \sum_{(j, i) \in \delta^- (i)} x_{ji} = 0, \; i = 0, \\
\label{eq:ptp-cc-cons}
&\sum_{(i, j) \in \delta^+ (i)} x_{ij} = 0, \sum_{(j, i) \in \delta^- (i)} x_{ji} = 1, \; i = n + 1, \\
\label{eq:ptp-subtour-elimination-cons}
&\sum_{(i, j) \in \delta^+ (S)} x_{ij} \ge y_b, \quad \forall \, S \subseteq N, \; b \in S, \\
\label{eq:ptp-node-var}
&y_i \in \{ 0, 1 \}, \quad \; \, \forall \, i \in V, \\
\label{eq:ptp-edge-var}
&x_{ij} \in \{ 0, 1 \}, \quad \forall \, (i, j) \in A.
\end{align}
The objective function~(\ref{eq:ptp-obj}) maximizes the difference between collected reward and traveling cost. Constraints~(\ref{eq:ptp-robot-leaving-cons}) and~(\ref{eq:ptp-robot-entering-cons}) ensure that one arc leaves and one arc enters each visited robot, respectively. Constraints~(\ref{eq:ptp-human-cons}) and~(\ref{eq:ptp-cc-cons}) ensure that the path starts from the supervisor and ends at the control center, respectively. Subtours are eliminated through~(\ref{eq:ptp-subtour-elimination-cons}). Finally,~(\ref{eq:ptp-node-var}) and~(\ref{eq:ptp-edge-var}) are variable definitions.

The optimization problem~\cref{eq:ptp-obj,eq:ptp-robot-leaving-cons,eq:ptp-robot-entering-cons,eq:ptp-human-cons,eq:ptp-cc-cons,eq:ptp-subtour-elimination-cons,eq:ptp-node-var,eq:ptp-edge-var} is a variant of the profitable tour problem and can be solved efficiently via mixed-integer programming~\cite{archetti2014chapter}. To retrieve the optimal path, we expand the path from vertex $0$, the supervisor, to vertex $n+1$, the control center, by following the arcs $x_{ij}$ that are assigned with a value of $1$.

To plan actions for the supervisor to complete the task, we solve the PTP problem~\cref{eq:ptp-obj,eq:ptp-robot-leaving-cons,eq:ptp-robot-entering-cons,eq:ptp-human-cons,eq:ptp-cc-cons,eq:ptp-subtour-elimination-cons,eq:ptp-node-var,eq:ptp-edge-var} at each time step, command the supervisor to rescue the first robot on the optimal path, and continue to plan and execute at the next time step following the framework of model predictive control.

\subsection{Practical Considerations}
Each robot in the fleet is assigned independent navigation tasks and the completion of the entire task is declared only when all the robots reach their respective destinations. To increase the efficiency of the robot fleet, higher priority should be given to the robots that are further away from their destinations than to those that are closer to complete the navigation tasks~\cite{wang2011spatio}. In other words, the robots traversing less distance from the starting point should be favored to be rescued earlier than those traversing longer distance. To this end, let $d_i, i \in N$ be the distance that the robot $i$ has traversed. We modify the definition of the scaled edge cost $\hat{c}_{ij}$ in the objective function~(\ref{eq:ptp-obj}) as follows:
\begin{equation}
\label{eq:modified-cost}
\hat{c}_{ij}
=
\mu c_{ij} (t) + \gamma \max (0, d_i - d_j), \quad \forall i, j \in N,
\end{equation}
where $\gamma$ is a weighting factor. For any two robots $i$ and $j$ in the fleet, if robot $i$ has traversed further distance than robot $j$ from the starting point, the traveling cost from $i$ to $j$ will have an additional penalty term of $\gamma (d_i - d_j)$ while the traveling cost from $j$ to $i$ will remain unchanged. As a result, the supervisor will be encouraged to visit robot $j$ before robot $i$.

Although the optimization problem~\cref{eq:ptp-obj,eq:ptp-robot-leaving-cons,eq:ptp-robot-entering-cons,eq:ptp-human-cons,eq:ptp-cc-cons,eq:ptp-subtour-elimination-cons,eq:ptp-node-var,eq:ptp-edge-var} is tractable using modern programming solvers, the number of constraints on the subtour elimination~(\ref{eq:ptp-subtour-elimination-cons}) grows exponentially with the number of robots. To ensure the real-time capability of the algorithm, we use lazy constraints in the replacement of the full set of constraints~(\ref{eq:ptp-subtour-elimination-cons}), i.e., the constraints are included only when they are violated by solutions found during the optimization~\cite{gurobi}.
\section{Case Study}
\label{sec:case-study}
The PTP is known to provide an optimal solution to a static GTP with constant node rewards and edge costs~\cite{kool2018attention}. In this section, we aim to evaluate the performance of the modified PTP~\cref{eq:ptp-obj,eq:ptp-robot-leaving-cons,eq:ptp-robot-entering-cons,eq:ptp-human-cons,eq:ptp-cc-cons,eq:ptp-subtour-elimination-cons,eq:ptp-node-var,eq:ptp-edge-var} on a practical dynamic graph.

We instantiate the dynamic GTP formulated in Section~\ref{sec:problem-formulation} as a one-to-many supervision problem of a robot fleet on an autonomous farm, in which compact agricultural robots are tasked with navigating through rows of crops for plant phenotyping~\cite{mueller2017robotanist}. Due to the environmental complexity and terrain variability in practice, robots may fail the navigation task in the field, and a supervisor is responsible for monitoring and rescuing the robots to complete the task.

\subsection{Simulation Environment}
We developed a simulation environment for the autonomous farm (Figure~\ref{fig:simulation}) to allow simulations of fleet management algorithms for one-to-many supervision in uncertain environments. We assume that the crops are arranged in evenly spaced vertical rows and the farm is surrounded by free space in which the supervisor and the robots can move without failures. The farm is divided evenly for each robot to conduct plant phenotyping through boustrophedon navigation.

\begin{figure}[t]
  \centering
  \includegraphics[width=0.85\linewidth]{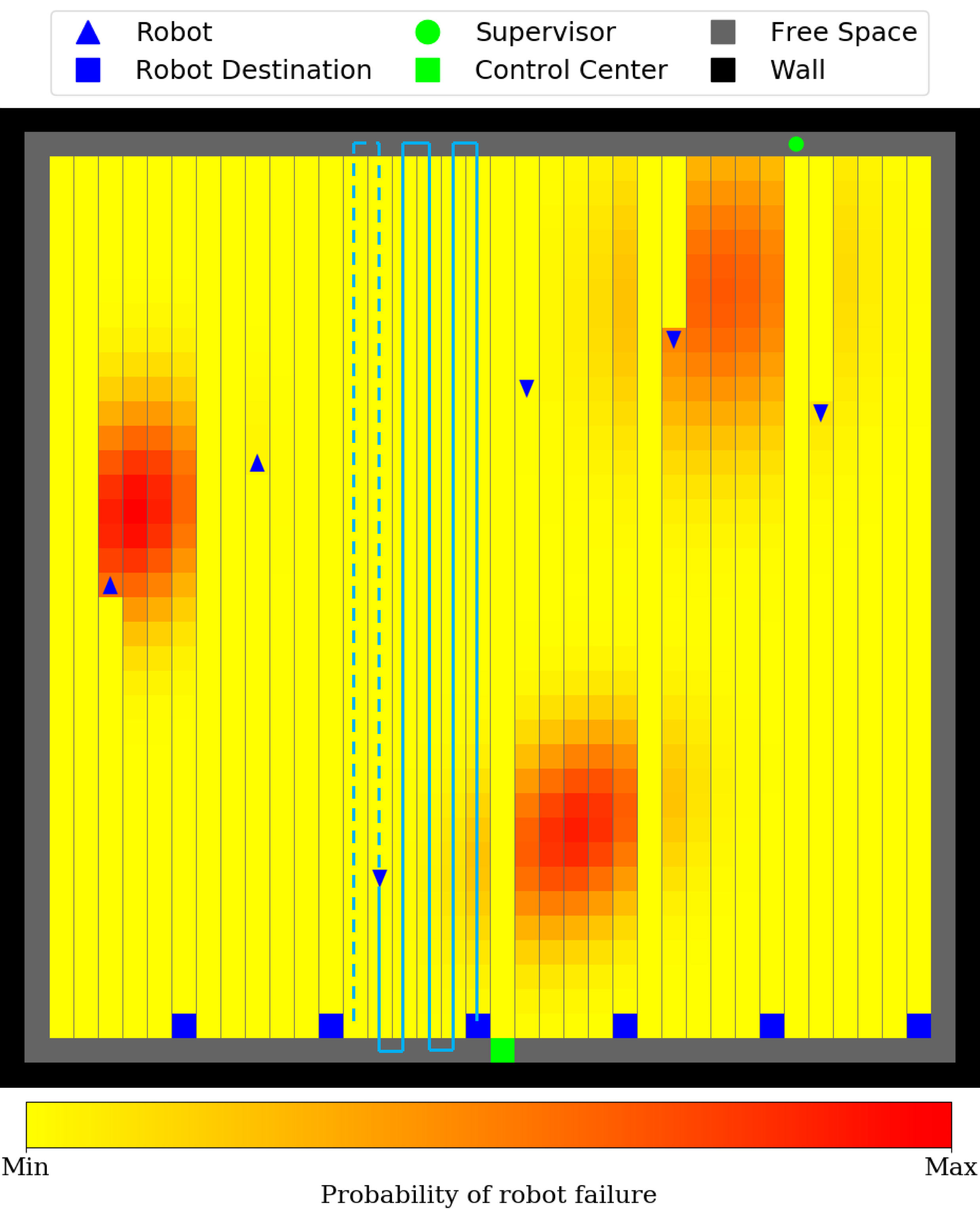}
  \caption{\textbf{Simulation environment of an autonomous farm with six robots.} The vertical grey lines mark the places where the crop is planted and thus represent the boundaries of traversable paths for the robots. The dashed cyan line is the history path traversed by the robot from the starting point, while the solid cyan line is the planned path towards the destination. The history and planned path are visualized only for one robot for clarity.}
  \label{fig:simulation}
\end{figure}

\begin{figure}[t]
  \centering
  \begin{subfigure}[b]{0.19\linewidth}
    \includegraphics[width=\linewidth]{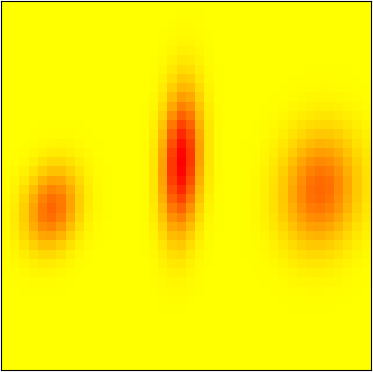}
    \caption{}
  \end{subfigure}
  \begin{subfigure}[b]{0.19\linewidth}
    \includegraphics[width=\linewidth]{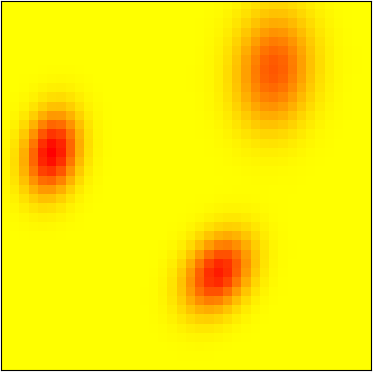}
    \caption{}
  \end{subfigure}
  \begin{subfigure}[b]{0.19\linewidth}
    \includegraphics[width=\linewidth]{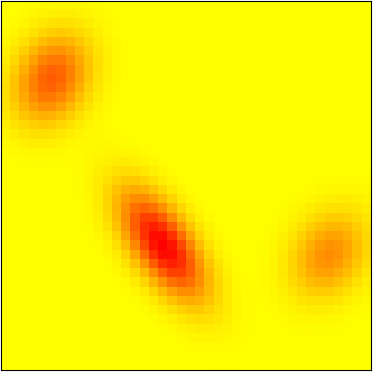}
    \caption{}
  \end{subfigure}
  \begin{subfigure}[b]{0.19\linewidth}
    \includegraphics[width=\linewidth]{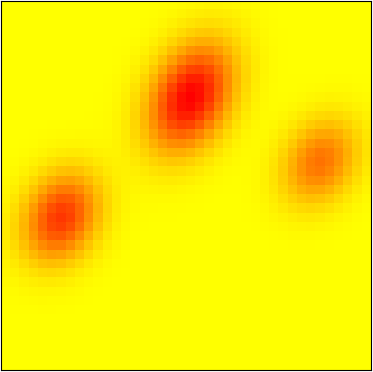}
    \caption{}
  \end{subfigure}
  \begin{subfigure}[b]{0.19\linewidth}
    \includegraphics[width=\linewidth]{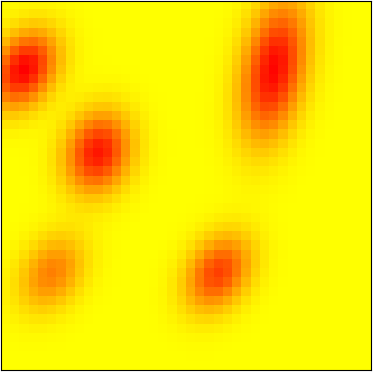}
    \caption{}
  \end{subfigure}
  \caption{\textbf{Agricultural fields used in the case study.} The red cells correspond to a higher probability of robot failure compared to the yellow ones. All the other elements in the simulation environment are omitted.}
  \label{fig:fields}
  \vspace{-3mm}
\end{figure}

In practice, the probability of robot failure is correlated with the navigation difficulty, which may vary over the field due to the presence of weeds, fallen plants, and uneven terrains, to name a few. To incorporate such environmental effects on robot navigation into the simulation, we assign each cell in the farm a nonnegative scalar, which represents the probability of robot failure, based on a bivariate mixed normal distribution with a proper range. Each scalar is further bounded above and below by adjustable thresholds to control the navigation difficulty or the levels of robot autonomy. In Figure~\ref{fig:simulation}, the color of each cell reflects the magnitude of the associated probability of robot failure, i.e., a robot is more likely to fail the navigation task in a red cell than in a yellow cell\footnote{For visualization purposes, the cells that have been traversed by the robots are reassigned with the value of the lower threshold, thus appearing to be yellow in color.}. Upon entering a cell, the robot samples uniformly from the range $[0, 1]$ and takes an action based on the result: proceeds to the next cell if the sampled number is greater than the scalar associated with the current cell or raises a navigation failure otherwise.

The simulation is initialized with the robots at the starting points and the supervisor at the control center. Upon the occurrence of the navigation failure of a robot, the supervisor can approach and rescue the robot to reactivate the autonomy. The robots and the supervisor can both move one cell at a time and can only move along rows of crops, i.e., move vertically, except in the free space. The completion of the task is declared only when all the robots reach the destinations and the supervisor returns to the control center. The expected distance that each robot can travel is computed based on the probability of failure associated with each cell on the planned path, which can be estimated from weed density and/or terrain traversability using aerial systems in practice~\cite{hudjakov2009aerial,sapkota2020mapping}. To plan actions for the supervisor, a directed graph is constructed as described in Section~\ref{sec:problem-formulation} and a receding horizon control strategy is employed as described in Section~\ref{subsec:ptp}.

\subsection{Experimental Setup}
We consider five different fields in the case study (Figure~\ref{fig:fields}), which follow the typical weed density patterns used for weeding robots~\cite{mcallister2020agbots}.

\begin{table}[t]
  \begin{center}
    \caption{Levels of autonomy parameters}
    \label{table:levels-of-autonomy-param}
    \begin{tabular}{ l | c  c  c }
      \toprule
      Levels of autonomy & Low & Mid & High \\
      \midrule
      \rule{-2.5pt}{2ex} Min probability of failure & $0.01$ & $0.01$ & $0$ \\
      Max probability of failure & $0.20$ & $0.15$ & $0.15$ \\
      \bottomrule
    \end{tabular}
  \end{center}
\end{table}

\begin{table}[t]
  \begin{center}
    \caption{Robot fleet size parameters}
    \label{table:robot-fleet-size-param}
    \begin{tabular}{ l | c  c  c }
      \toprule
      Robot fleet size & Small & Mid & Large \\
      \midrule
      \rule{-2.5pt}{2ex} Number of robots & $4$ & $6$ & $9$ \\
      \bottomrule
    \end{tabular}
  \end{center}
  \vspace{-3mm}
\end{table}

We evaluate the performance of the proposed method on the simulated autonomous farm, along with several baseline methods. Let $N_\text{fail} (t) \subseteq N$ be the set of robots that fail the navigation tasks and $v(t) \in V$ be the target of the supervisor given by an algorithm at time $t$. The brief introduction of the baselines is given as follows.
\begin{itemize}
\item
\textit{Greedy-HR (Highest Reward)} assigns the supervisor to the failed robot that generates the highest \textit{one-step} reward:
\begin{equation*}
v(t)
=
\operatorname*{argmax}_j \, \hat{r}_j(t) - \hat{c}_{ij}(t), \; \; i=0, \, j \in N_\text{fail} (t).
\end{equation*}

\item
\textit{Greedy-FTG (Furthest-to-go)} assigns the supervisor to the failed robot that is furthest from the destination:
\begin{equation*}
v(t)
=
\operatorname*{argmin}_j \, d_j, \; \; j \in N_\text{fail} (t).
\end{equation*}

\item
\textit{Greedy-CR} (Closest Robot) assigns the supervisor to the failed robot that is closest to the supervisor:
\begin{equation*}
v(t)
=
\operatorname*{argmin}_j \, c_{ij} (t), \; \; i=0, \, j \in N_\text{fail} (t).
\end{equation*}

\item
\textit{Gittins Index} assigns the supervisor to the failed robot according to the Gittins Index~\cite{weber1992gittins}, which has been applied in weeding algorithms with dynamic rewards~\cite{mcallister2019agbots}:
\begin{equation*}
v(t)
=
\operatorname*{argmax}_j \, \frac{\gamma^{c_{ij}(t)} \hat{r}_j (t)}{\sum_{k=0}^{c_{ij}(t)} \gamma^k}, \; \; i=0, \, j \in N_\text{fail} (t).
\end{equation*}
\end{itemize}
In cases where all the robots are operating normally ($N_\text{fail} (t) = \emptyset$), the above methods assign the supervisor to the control center, i.e., $v(t) = n+1$. To our knowledge, our work is the first to formulate the one-to-many supervision problem as a dynamic GTP, and the above baselines are approaches typically applied in practice to tackle the problem in the absence of a rigorous framework.

Quantitatively, we compare different methods using the following two metrics:
\begin{itemize}
\item
\textit{Task Completion Time}: The time steps elapsed from the start of the simulation until the completion of the task.

\item
\textit{Human Working Time}: The total time steps when the supervisor is not at the control center.
\end{itemize}

The levels of autonomy can affect how much attention and assistance a human supervisor has to provide the robot, while the fleet size has an impact on the difficulty of decision making for multi-robot assistance. To examine the performance of different methods under different conditions, we conduct experiments with varying levels of autonomy and robot fleet size, as described in Table~\ref{table:levels-of-autonomy-param} and Table~\ref{table:robot-fleet-size-param}, respectively. For each combination of field, levels of autonomy, and robot fleet size, each method is evaluated over $10$ trials with different realizations of robot navigation.

\begin{table}[t]
\captionsetup{font=footnotesize}
  \begin{center}
    \caption{Results of fleet management with varying levels of autonomy}
    \label{table:results-mid-fleet-size}
    \begin{tabular}{ l | c  c }
      \toprule
      Methods & Task Completion Time & Human Working Time \\
      \midrule
      \rule{-2.5pt}{2ex} Greedy-HR & $1190.3 \, / \, 1036.1 \, / \, 760.4$ & $1167.0 \, / \, 1014.5 \, / \,725.0$ \\
      Greedy-FTG & $1010.4 \, / \, 868.7 \, / \, 666.2$ & $997.7 \, / \, 854.0 \, / \,641.6$ \\
      Greedy-CR & $929.3 \, / \, 825.5 \, / \, 661.8$ & $900.7 \, / \, 786.6 \, / \, 606.6$ \\
      Gittin's Index & $967.0 \, / \, 817.8 \, / \, 659.9$ & $938.3 \, / \, 776.2 \, / \, 607.0$ \\
      PTP & $\mathbf{905.6} \, / \, \mathbf{788.3} \, / \, \mathbf{626.2}$ & $\mathbf{892.9} \, / \, \mathbf{772.6} \, / \, \mathbf{601.2}$ \\
      \bottomrule
    \end{tabular}
  \end{center}
\end{table}

\begin{figure}[t]
  \centering
  \includegraphics[width=0.9\linewidth]{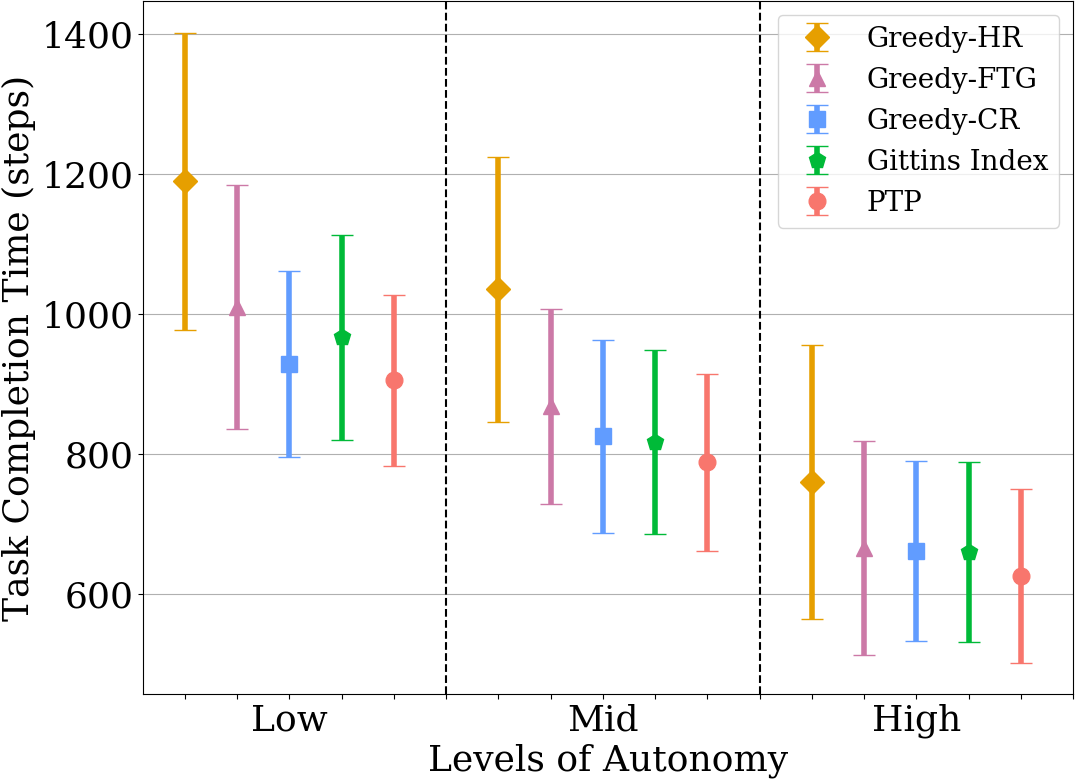}
  \caption{\textbf{Task completion time with varying levels of autonomy.} The robot fleet size is fixed to mid.}
  \label{fig:task-completion-time-fixed-fleet-size}
  \vspace{-2mm}
\end{figure}

\subsection{Results}
\label{subsec:results}
In the first experiment, we fix the robot fleet size to mid and vary the levels of autonomy. The results are presented in Table~\ref{table:results-mid-fleet-size} in the order of \textit{low}/\textit{mid}/\textit{high} autonomy. The task completion time with averages and standard deviations over different fields and trials are visualized in Figure~\ref{fig:task-completion-time-fixed-fleet-size}. As shown, all the methods are able to complete the task in less time with the increase of levels of autonomy as the robots encounter fewer failures. By performing long-horizon planning for the supervisor on the graph, PTP achieves the fastest task completion time and human working time among the five methods under all levels of autonomy. The visualization of human working time reveals similar patterns to those from task completion time and thus omitted for brevity. The observation of longer human working time under lower autonomy also supports the intuition that the supervisor is required to provide more assistance to the robots with more failures occurred.

In the second experiment, we fix the levels of autonomy to low, which generates the smallest margin between PTP and other baselines in the first experiment, and vary the robot fleet size. The results are presented in Table~\ref{table:results-low-autonomy} in the order of \textit{small}/\textit{mid}/\textit{large} fleet size. Similarly, we visualize the task completion time with averages and standard deviations over different fields and trials in Figure~\ref{fig:task-completion-time-fixed-autonomy}. Similar to the findings in the first experiment, PTP outperforms the compared methods in both metrics under most conditions except for a longer human working time with small robot fleet size. As a side note, we observe that the navigation task can be completed in less time with more robots in the team for all methods.

\begin{table}[t]
  \begin{center}
    \caption{Results of fleet management with varying robot fleet size}
    \label{table:results-low-autonomy}
    \begin{tabular}{ l | c  c }
      \toprule
      Methods & Task Completion Time & Human Working Time \\
      \midrule
      \rule{-2.5pt}{2ex} Greedy-HR & $1272.0 \, / \, 1190.3 \, / \, 1049.8$ & $1199.0 \, / \, 1167.0 \, / \, 1037.5$ \\
      Greedy-FTG & $1096.6 \, / \, 1010.4 \, / \, 987.1$ & $1074.6 \, / \, 997.7 \, / \, 978.7$ \\
      Greedy-CR & $1060.0 \, / \, 929.3 \, / \, 858.0$ & $\mathbf{1011.6} \, / \, 900.7 \, / \, 841.4$ \\
      Gittin's Index & $1077.4 \, / \, 967.0 \, / \, 849.1$ & $1021.8 \, / \, 938.3 \, / \, 834.0$ \\
      PTP & $\mathbf{1042.6} \, / \, \mathbf{905.6} \, / \, \mathbf{815.6}$ & $1017.4 \, / \, \mathbf{892.9} \, / \, \mathbf{806.4}$ \\
      \bottomrule
    \end{tabular}
  \end{center}
\end{table}

\begin{figure}[t]
  \centering
  \includegraphics[width=0.9\linewidth]{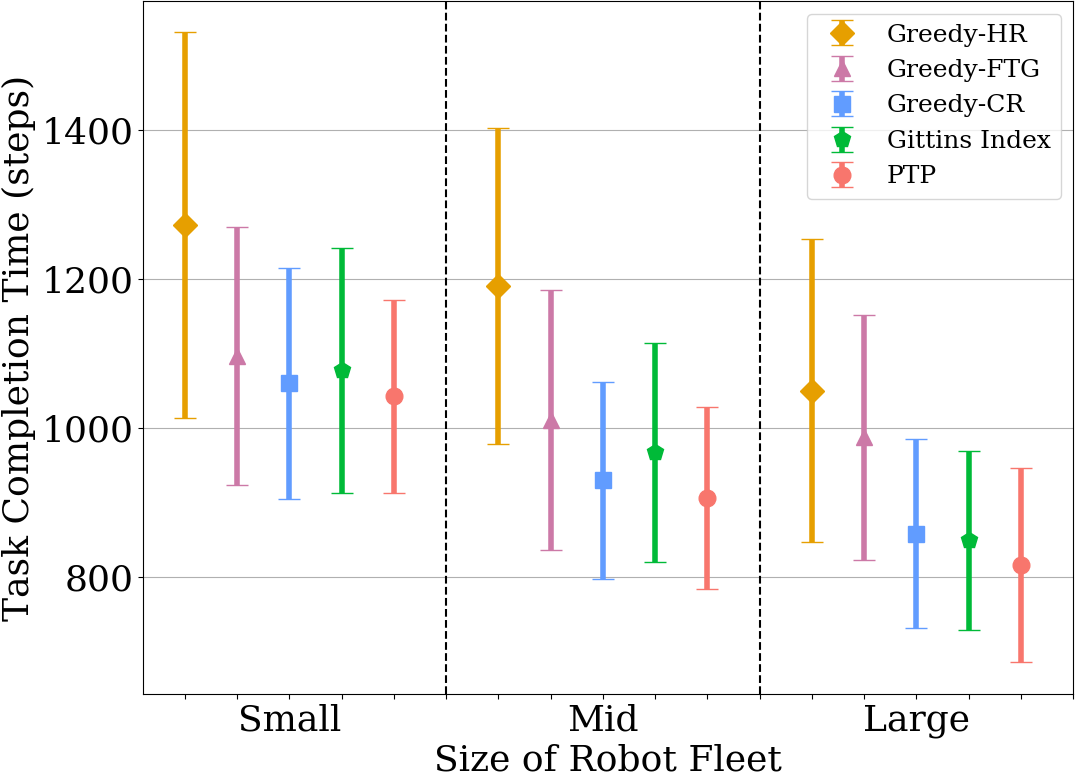}
  \caption{\textbf{Task completion time with varying robot fleet size.} The levels of autonomy is fixed to low.}
  \label{fig:task-completion-time-fixed-autonomy}
  \vspace{-3mm}
\end{figure}

To better understand the difference between the effects of the actions planned by the PTP and other baselines, we examine the progress made by the human-robot team against time. The percentage of field covered over time with mid autonomy and mid robot fleet size is plotted in Figure~\ref{fig:covered-field-vs-time}. As shown in Figure~\ref{subfig:covered-field-vs-time-global}, Greedy-CR and Gittins index exhibit better performance than PTP in the first half of the simulation as both baselines encourage the supervisor to rescue the close robots. With shorter robot waiting time, the robot fleet can make faster progress at the initial stage of the navigation task. In contrast, the PTP provides long-term planning for the supervisor and considers robot coordination through the modified cost~(\ref{eq:modified-cost}), thus generating superior global performance on the task compared with other baselines as shown in Figure~\ref{subfig:covered-field-vs-time-local} and the second half of Figure~\ref{subfig:covered-field-vs-time-global}. In other words, each robot is encouraged to wait if the supervisor can benefit from rescuing another robot that either needs more urgent help or can save the distance traveled by the supervisor in the future.

\begin{figure}[t]
  \centering
  \begin{subfigure}[b]{0.49\linewidth}
    \includegraphics[width=\linewidth]{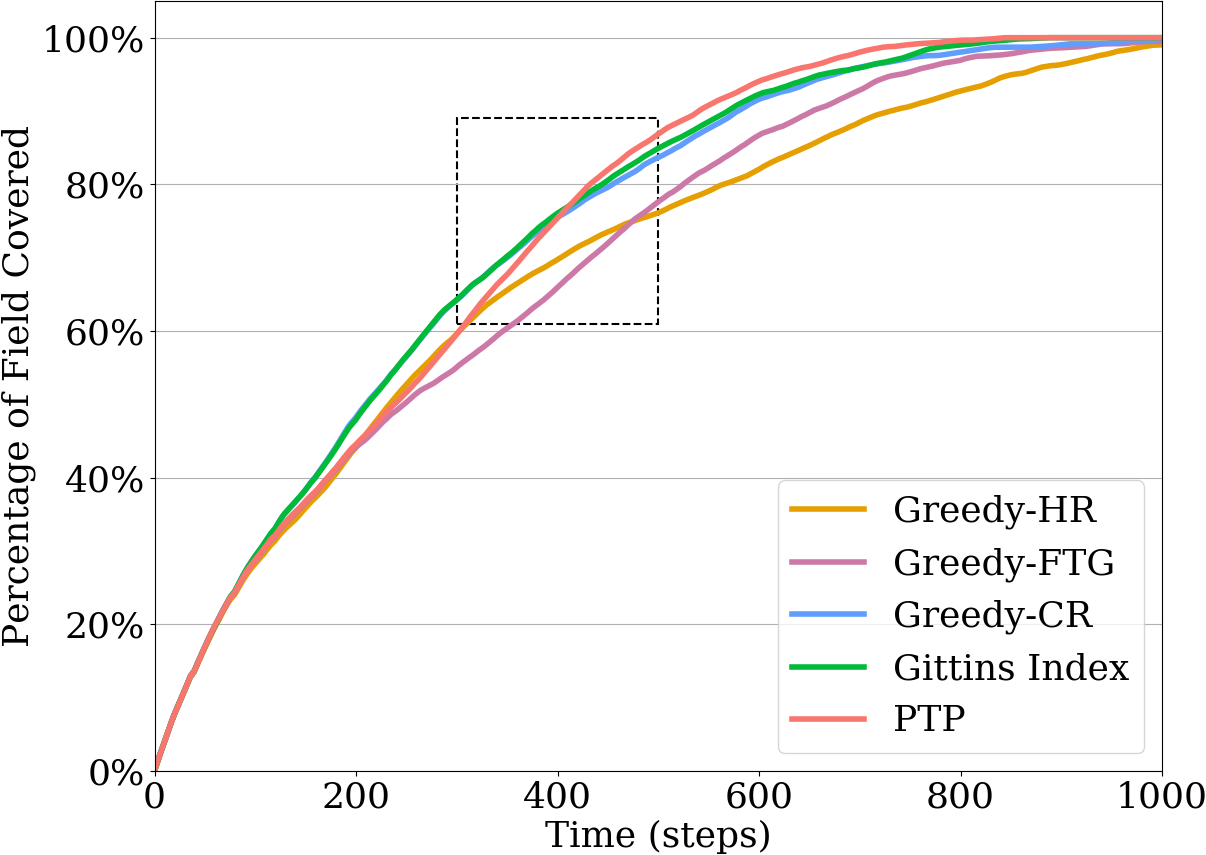}
    \caption{}
    \label{subfig:covered-field-vs-time-global}
  \end{subfigure}
  \begin{subfigure}[b]{0.49\linewidth}
    \includegraphics[width=\linewidth]{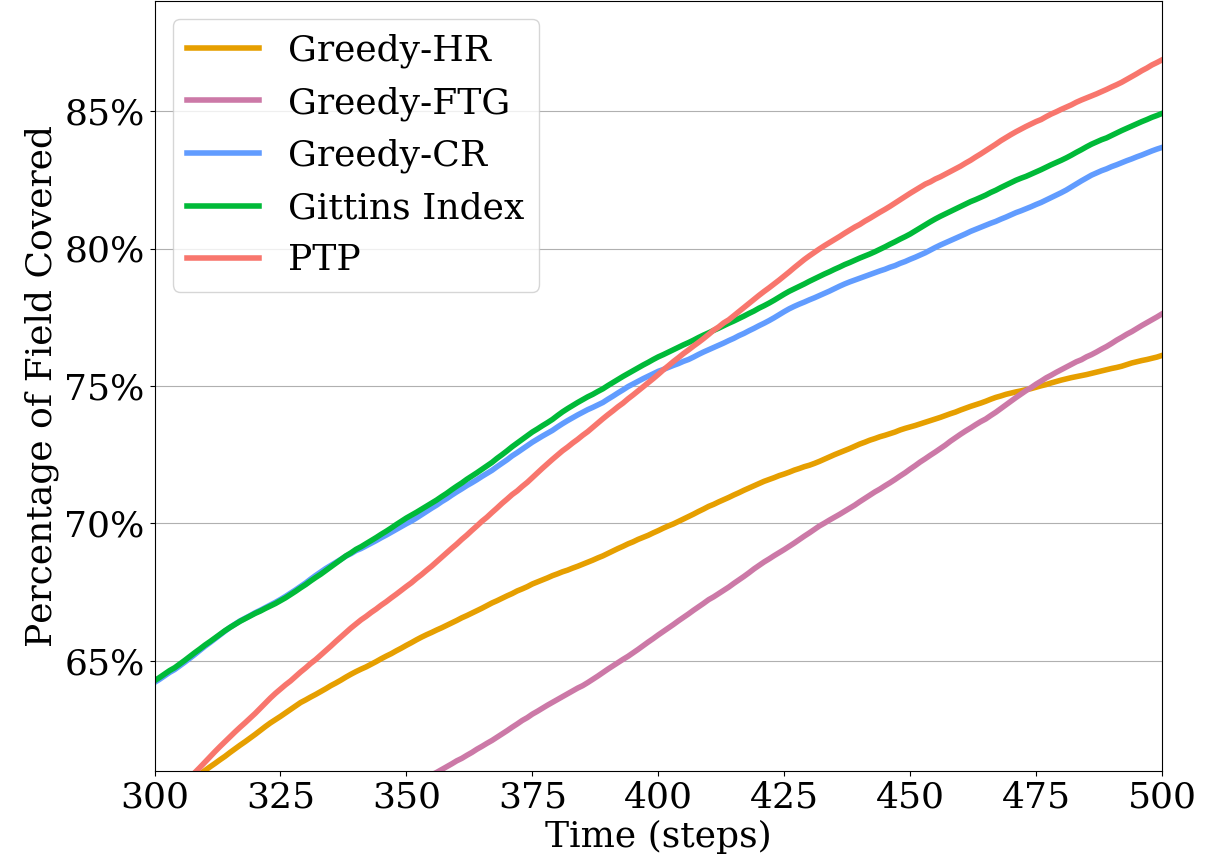}
    \caption{}
    \label{subfig:covered-field-vs-time-local}
  \end{subfigure}
  \caption{\textbf{Percentage of field covered over time.} The standard deviations over different trials are omitted for clear presentation. \textbf{(a)} The average percentage of field covered over time across the whole simulation. \textbf{(b)} The details of figure around $405$ time steps when PTP surpasses the second best method in terms of task completion progress.}
  \label{fig:covered-field-vs-time}
  \vspace{-3mm}
\end{figure}
\section{Discussion and Limitations}
The results presented in Section~\ref{sec:case-study} show that compared to baseline methods, the modified PTP provides a promising solution that can enable the human-robot team to complete collaborative tasks with higher efficiency. Despite the advantages, our work also encompasses several limitations, which will be discussed in detail in this section.

One of the main challenges in problem formulation of fleet management is the definition of the reward function $r_i(t)$ for each robot. In this work, the reward is defined as the \textit{net} expected distance that a robot can travel with supervisor intervention, which can change abruptly from $0$ to a positive number at time $t$ when the robot fails. Such discontinuity in time can result in bad performance of the solution obtained at time $t-1$ on the updated dynamic GTP, which can also be seen from the bound~(\ref{ineq:theorem}) as $\alpha$ needs to be large enough to encompass the change in one time step. To resolve the issue, a smooth reward function over time is desired to represent the ``value" of visiting a robot.

In addition, as noted in Section~\ref{subsec:results}, we observe that the modified PTP generates a longer robot waiting time compared to Greedy-CR and Gittins Index, i.e., the robots spend more time on waiting for help from human after a failure occurs. Such a low robot utilization can increase the energy cost in practice when the robots are idle but can be potentially improved by formulating and incorporating the robot waiting time into the objective function of the modified PTP.

Apart from the performance, the real-time capability is another important factor that affects the practical value of an algorithm in real-world applications. To act intelligently, the supervisor in the field requires prompt guidance from the system with small time delays. In our case study, the modified PTP returns the optimal solution within $2.26$, $4.96$, and $24.02$ ms on average in a complete graph with $6$, $8$, and $11$ nodes. Although the algorithm is efficient enough to support the management of robot fleets with moderate size, it is known that the worst-case running time of exact algorithms for PTPs scales superpolynomially with the number of nodes in a graph~\cite{bienstock1993note}. With the recently presented idea of learning heuristics for combinatorial optimization problems~\cite{kool2018attention}, reinforcement learning approaches can be explored to approximate the optimal solution to the dynamic GTP.

Despite the limitations and potential improvements, we hope that this work builds a rigorous framework for fleet management problems with physical assistance, provides a competitive algorithm that enables efficient human-robot collaboration, and can encourage the research on one-to-many supervision from a new perspective of graph traversal problem.
\section{Conclusion}
In this work, we formulate the one-to-many supervision of multi-robot systems as a dynamic graph traversal problem, in which the node rewards represent the value of visiting a robot and the edge costs reflect the traveling cost from one node to another. We approximate the solution to the intractable dynamic problem by solving a static GTP and develop a bound on the approximation error. A practical implementation on optimizing the static GTP is developed based on a modified profitable tour problem, whose optimal solution can be obtained efficiently using mixed-integer programming. Our case study shows that the proposed method outperforms various baselines with faster task completion time and human working time under various conditions in a simulated autonomous farm. Equipped with such an intelligent advising system, the performance of the human-multi-robot team can be potentially enhanced in real-world applications where physical assistance is required. With several possible directions to explore in the future, we hope that this work marks the first step towards formalizing a general supervision problem as a graph traversal problem and can serve as a benchmark for future methods.

\section*{Acknowledgements}
This work was supported by the the USDA National Institute of Food and Agriculture (USDA/NIFA), through the National Robotics Initiative 2.0 (NIFA\#2021-67021-33449), the AI Institute AIFARMS through the Agriculture and Food Research Initiative (AFRI) (USDA/NIFA Award no. 2020-67021-32799), as well as the Illinois Center for Digital Agriculture.



\bibliographystyle{plainnat}
\bibliography{references}

\end{document}